\documentclass{article}

\PassOptionsToPackage{numbers, compress}{natbib}
%

\usepackage[preprint]{neurips_2019}

\usepackage{amsfonts}
\usepackage{bbm}
\usepackage{comment}
\usepackage{multirow}
\usepackage[utf8]{inputenc} 
\usepackage[T1]{fontenc}    
\usepackage{hyperref}       
\usepackage{url}            
\usepackage{booktabs}       
\usepackage{amsfonts}       
\usepackage{nicefrac}       
\usepackage{microtype}      
\usepackage{graphicx}
\usepackage{hyperref}

\usepackage{floatrow}
\newfloatcommand{capbtabbox}{table}[][\FBwidth]
\floatsetup{heightadjust=object}
\usepackage{subcaption}

\usepackage{amsmath}
\usepackage{amssymb}
\usepackage[plain,noend,ruled]{algorithm2e}
\usepackage{mathtools}  
\usepackage{graphicx}

\usepackage{amsthm}
\numberwithin{equation}{section}
\theoremstyle{plain}

\newtheorem{theorem}{Theorem}
\newtheorem{lemma}{Lemma}

\def \bR {\mathbb{R}}

\def \cN {\mathcal{N}}

\usepackage{xspace}
\DeclareMathOperator*{\diag}{diag}

\newcommand{\Err}{\mathcal{E}}
\newcommand{\RR}{\mathbb{R}}
\newcommand{\set}[1]{\{#1\}}
\newcommand{\Ind}[1]{\mathbf{1}\{#1\}}
\newcommand{\MSE}{\textup{MSE}}
\newcommand{\RSE}{\textup{RSE}}




\DeclareMathOperator*{\argmin}{arg\,min}

\usepackage{cleveref}
\crefname{lemma}{Lemma}{Lemmas}
\Crefname{lemma}{Lemma}{Lemmas}
\crefname{thm}{Theorem}{Theorems}
\Crefname{thm}{Theorem}{Theorems}
\crefformat{equation}{(#2#1#3)}

\newcommand{\new}{\textup{new}}
\newcommand{\old}{\textup{old}}
\title{Modeling treatment events in disease progression}

%

\author{Guanyang Wang \\
  Department of Mathematics \\
  Stanford University\\
  \texttt{guanyang@stanford.edu} \\
  \And
  Yumeng Zhang \\
  Department of Statistics \\
  Stanford University\\
  \texttt{yumengzh@stanford.edu}\\
  \And
  Yong Deng\\
  Department of Materials Science \& Engineering\\
  Stanford University\\
  \texttt{yongdeng@stanford.edu} \\
   \And
   Xuxin Huang \\
   Department of Applied Physics \\
   Stanford University\\
   \texttt{xxhuang@stanford.edu} \\
   \And
   {\L}ukasz Kidzi\'nski\\
   Department of Bioengineering \\
   Stanford University\\
   \texttt{lukasz.kidzinski@stanford.edu} \\
}

\begin{document}

\maketitle

\begin{abstract}
    Ability to quantify and predict progression of a disease is fundamental for selecting an appropriate treatment. Many clinical metrics cannot be acquired frequently either because of their cost (e.g. MRI, gait analysis) or because they are inconvenient or harmful to a patient (e.g. biopsy, x-ray). In such scenarios, in order to estimate individual trajectories of disease progression, it is advantageous to leverage similarities between patients, i.e. the covariance of trajectories, and find a latent representation of progression. Most of existing methods for estimating trajectories do not account for events in-between observations, what dramatically decreases their adequacy for clinical practice. In this study, we develop a machine learning framework named Coordinatewise-Soft-Impute (CSI) for analyzing disease progression from sparse observations in the presence of confounding events. CSI is guaranteed to converge to the global minimum of the corresponding optimization problem. Experimental results also demonstrates the effectiveness of CSI using both simulated and real dataset.
\end{abstract}

\newcommand{\tby}{{\mathbf{y}}}
\section{Introduction}
The course of disease progression in individual patients is one of the biggest uncertainties in medical practice. In an ideal world, accurate, continuous assessment of a patient’s condition helps with prevention and treatment. However, many medical tests are either harmful or inconvenient to perform frequently, and practitioners have to infer the development of disease from sparse, noisy observations. 

In its simplest form, the problem of modeling disease progressions is to fit the curve of $y(t), t\in[t_{\min},t_{\max}]$ for each patient, given sparse observations $\tby:=(y(t_1),\dots,y(t_n))$. 
Due to the high-dimensional nature of longitudinal data, existing results usually restrict solutions to subspace of functions and utilize similarities between patients via enforcing low-rank structures. 
One popular approach is the mixed effect models, including Gaussian process approaches \cite{verbeke1997linear, zeger1988models} and functional principal components \cite{james2000principal}.
While generative models are commonly used and have nice theoretical properties, their result could be sensitive to the underlying distributional assumptions of observed data and hard to adapt to different applications.
Another line of research is to pose the problem of disease progression estimation as an optimization problem. Kidzinski and Hastie.~\cite{kidzinski2018longitudinal} proposed a framework which formulates the problem as a matrix completion problem and solves it using matrix factorization techniques. This method is distribution-free and flexible to possible extensions.

Meanwhile, both types of solutions model the \emph{natural} progression of disease using observations of the \emph{targeted variables} only. They fail to incorporate the existence and effect of human interference: medications, therapies, surgeries, etc. Two patients with similar symptoms initially may have different futures if they choose different treatments. Without that information, predictions can be way-off. 

To the best of our knowledge, existing literature talks little about modeling treatment effect on disease progression. In \cite{kidzinski2018longitudinal}, authors use concurrent observations of auxillary variables (e.g.~oxygen consumption to motor functions) to help estimate the target one, under the assumption that both variables reflect the intrinsic latent feature of the disease and are thus correlated. Treatments of various types, however, rely on human decisions and to some extent, an exogenous variable to the development of disease. Thus they need to be modeled differently.

In this work, we propose a model for tracking disease progression that includes the effects of treatments. We introduce the Coordinatewise-Soft-Impute (CSI) algorithm for fitting the model and investigate its theoretical and practical properties. The contribution of our work is threefold:
First, we propose a model and an algorithm CSI, to estimate the progression of disease which incorporates the effect of treatment events. The framework is flexible, distribution-free, simple to implement and generalizable. Second, we prove that CSI converges to the global solution regardless of the initialization. Third, we compare the performance of CSI with various other existing methods on both simulated data and a dataset of Gillette Children's Hospital with patients diagnosed with Cerebral Palsy, and demonstrate the superior performances of CSI.

The rest of the paper is organized as follows. In Section \ref{sec:problem_statement_and_related_work} we state the problem and review existing methods. Next, in Section \ref{sec:description of CSI} we describe the model and the algorithm. Theoretic properties of the algorithm are derived in Section \ref{sec:convergence analysis}. Finally, in Section \ref{sec:simulation study} and \ref{sec:data study} we provides empirical results of CSI on the simulated and the real datesets respectively. We discuss some future directions in Section \ref{sec:future work}.

\section{Problem statement and related work}\label{sec:problem_statement_and_related_work}
Let $y(t)$ be the trajectory of our objective variable, such as the size of tumor, over fixed time range $t\in [t_\textup{min},t_\textup{max}]$, and $N$ be the number of patients. For each patient $1\le i \le N$, we measure its trajectory $y_i(t)$ at $n_{i}$ irregularly time points $\mathbf{t}_i = [t_{i,1},t_{i,2},...,t_{i,n_{i}}]'$ 
and denote the results as  $\mathbf{y}_i 
= [ y_{i,1},..., y_{i,n_{i}}]'
= [ y_i(t_{i,1}),..., y_i(t_{i,n_{i}})]'$. 
We are primarily interested in estimating the disease progression trajectories $\{ y_i(t)\}_{i=1}^N$ of all $N$ patients, based on observation data $\{(\mathbf{t}_i, \mathbf{{y}}_i)\}_{i=1}^N$.

To fit a continuous curve based on discrete observations, we restrict our estimations to a finite-dimensional space of functions.
Let $\{b_i, i\in \mathbb N\}$ be a fixed basis  of $L_{2}([t_{\min},t_{\max}])$ (e.g.~splines, Fourier basis) and $\mathbf{b} =  \{b_{i}:1\le i\le K\}$ be first $K$ components of it.
The problem of estimating $ y_i(t)$ can then be reduced to the problem of estimating the coefficients $\mathbf{w}_{i} = [w_{i,1}, w_{i,2}, \cdots, w_{i,K}]'$ such that $\mathbf{w}_{i}' \mathbf{b}(t)$ is close to $y_i(t)$ at time $t\in\mathbf{t}_i$.

When the number of observations per patient is less than or equal to the number of basis functions $K$, we can perfectly fit any curve without error, leading to overfitting. Moreover, this direct approach ignores the similarities between curves. Below we describe two main lines of research improving on this, the mixed-effect model and the matrix completion model.

\subsection{Linear mixed-effect model}\label{subsec:mixed-effect model}
In mixed-effect models, every trajectory $y_i(t)$ is assumed to be composed of two parts: the \textit{fixed effect} $\mu(t) = \mathbf{m}' \mathbf{b}(t)$ for some $\mathbf m \in \bR^K$ that remains the same among all patients and a \textit{random effect} $\mathbf w_i \in \bR^K$ that differs for each $i\in\set{1,\dots,N}$. In its simplest form, we assume
\begin{align*}
    &\mathbf w_i \sim \cN(\mathbf 0,\Sigma) 
    \quad \textup{ and }\quad
    \mathbf {y}_i | \mathbf w_i \sim \cN(\mathbf\mu_i + B_i \mathbf w_i, \sigma^2 \mathbb I_{n_i}),
\end{align*}
where $\Sigma$ is the $K\times K$ covariance matrix, $\sigma$ is the standard deviation and $\mathbf\mu_i = [\mu(t_{i,1}), \mu(t_{i,2}), \cdots \mu(t_{i,n_i})]'$, $B_i = [\mathbf b(t_{i,1}), \mathbf b(t_{i,2}), \cdots, \mathbf b(t_{i,n_i})]'$ are functions $\mu(t)$ and $\mathbf{b}(t)$ evaluated at the times $\mathbf{t}_i$, respectively. 
Estimations of model parameters $\mu,\Sigma$ can be made via expectation maximization (EM) algorithm \cite{laird1982random}. Individual coefficients $\mathbf w_i$ can be estimated using the best unbiased linear predictor (BLUP) \cite{henderson1975best}. 

In linear mixed-effect model, each trajectory is estimated with $|\mathbf{w}_i|=K$ degrees of freedom, which can still be too complex when observations are sparse. One typical solution is to assume a low-rank structure of the covariance matrix $\Sigma$ by introducing a contraction mapping $A$ from the functional basis to a low-dimensional latent space. More specifically, one may  rewrite the LMM model as 
\begin{align*}
    \mathbf {y}_i |  \tilde{\mathbf w}_i \sim \mathcal{N}(\mathbf\mu_i + B_i A\tilde{\mathbf  w}_i, \sigma^2 \mathbb I_{n_i}),
\end{align*}
where $A$ is a $K\times q$ matrix with $q < K$ and  $\tilde{\mathbf w}_i \in \bR^q$ is the new, shorter random effect to be estimated. 
Methods based on low-rank approximations are widely adopted and applied in practice  and different algorithms on fitting the model have been proposed \cite{james2000principal, lawrence2004gaussian,schulam2016disease}.
In the later sections, we will compare our algorithm with one specific implementation  named functional-Principle-Component-Analysis (fPCA) \cite{james2000principal}, which uses EM algorithm for estimating model parameters and latent variables $\mathbf {w}_i$.

\subsection{Matrix completion model}\label{subsec:matrix completion model}
While the probabilistic approach of mixed-effect models offers many theoretical advantages including convergence rates and inference testing, it is often sensitive to the assumptions on distributions, some of which are hard to verify in practice. To avoid the potential bias of distributional assumptions in mixed-effect models, Kidzinski et al.~in \cite{kidzinski2018longitudinal} formulate the problem as a sparse matrix completion problem.  We will review this approach in the current section.

To reduce the continuous-time trajectories into matrices, we discretize the time range $[t_\textup{min},t_\textup{max}]$ into $T$ equi-distributed points $G = [\tau_1,\dots,\tau_T]$ with $\tau_{1} = t_{\min}, \tau_{T} = t_{\max}$ and let $B = [\textbf{b}(\tau_1), \textbf{b}(\tau_2), \cdots, \textbf{b}(\tau_T)]'\in \bR^{T\times K}$ be the projection of the $K$-truncated basis $\mathbf{b}$ onto grid $G$.
The $N\times K$ observation matrix $Y$ is constructed from the data $\set{(\mathbf{t}_i,\mathbf{y}_i)}_{i=1}^N$ by rounding the time $t_{i,j}$ of every observation $y_{i}(t_{i,j})$ to the nearest time grid and regarding all other entries as missing values. Due to sparsity, we assume that no two observation $y_i(t_{i,j})$'s are mapped to the same entry of $Y$. 

Let $\Omega$ denote the set of all observed entries of $Y$.
For any matrix $A$, let $P_{\Omega}(A)$ be the projection of $A$ onto $\Omega$, i.e.~$P_{\Omega}(A)=M$  where $M_{i,j}=A_{i,j}$ for $(i,j)\in\Omega$ and $M_{i,j}=0$ otherwise. Similarly, we define $P^{\perp}_\Omega(A) = A - P_\Omega(A)$ to be the projection on the complement of $\Omega$. 
Under this setting, the trajectory prediction problem is reduced to the problem of fitting a $N\times K$ matrix $W$ such that $WB' \approx Y$ on observed indices $\Omega$. 

The direct way of estimating $W$ is to solve the optimization problem
\begin{equation}
    \argmin_{W}\frac12\|P_{\Omega}(Y-WB')\|^{2}_{F},\label{eq:optim-problem}
\end{equation}
where $\|\cdot\|_{F}$ is the Fr\"obenius norm. Again, if $K$ is larger than the number of observations for some subject we will overfit. To avoid this problem we need some additional constraints on $W$. A typical approach in the matrix completion community is to introduce a nuclear norm penalty---a relaxed version of the rank penalty while preserving convexity~\cite{rennie2005fast, candes2009exact}. The optimization problem with the nuclear norm penalty takes form
\begin{equation}\label{e:algo1}
\argmin_{W}\frac 12\|P_{\Omega}(Y-WB')\|^{2}_{F}+\lambda\|W\|_{*},
\end{equation}
where $\lambda>0$ is the regularization parameter, $\|\cdot\|_{F}$ is the Fr\"obenius norm, and $\|\cdot\|_{*}$ is the nuclear norm, i.e. the sum of singular values. 
In \cite{kidzinski2018longitudinal}, a Soft-Longitudinal-Impute (SLI) algorithm is proposed to solve \eqref{e:algo1} efficiently. For completeness of the paper, we include the SLI  algorithm in Appendix \ref{appendix:SLI}, while noting that it is also a special case of our algorithm \ref{alg:coordinate-soft-impute} defined in the next section with $\mu$ fixed to be $0$.

\section{Modeling treatment in disease progression}\label{sec:description of CSI}

In this section, we introduce our model on effect of treatments in disease progression.

A wide variety of treatments with different effects and durations exist in medical practice and it is impossible to build a single model to encompass them all. In this study we take the simplified approach and regard treatment, with the example of one-time surgery in mind, as a non-recurring event with an additive effect on the targeted variable afterward.
Due to the flexibility of formulation of optimization problem \eqref{eq:optim-problem}, we build our model based on matrix completion framework of Section~\ref{subsec:matrix completion model}.  

More specifically, let $s(i)\in G$ be the time of treatment of the $i$'th patient, rounded to the closest $\tau_{k}\in G$ ($s(i)=\infty$ if no treatment is performed). 
We encode the treatment information as a $N\times T$ zero-one matrix $I_S$, where $(I_S)_{i,j}=1$ if and only $\tau_j \ge s(i)$, i.e.~patient $i$ has already taken the treatment by time $\tau_j$. Each row of $I_S$ takes the form of  $(0, \cdots, 0, 1,\cdots, 1)$. Let $\mu$ denote the average additive effect of treatment among all patients. In practice, we have access to the sparse observation matrix $Y$ and surgery matrix $I_S$ and aim to estimate the treatment effect $\mu$ and individual coefficient matrix $W$ based on $Y, I_S$ and the fixed basis matrix $B$ such that $WB' + \mu I_S \approx Y$.

Again, to avoid overfitting and exploit the similarities between individuals, we add a penalty term on the nuclear norm of $W$. The optimization problem is thus expressed as:
\begin{equation}\label{e:coordinate_soft_impute}
\argmin_{\mu, W}\frac12\|P_{\Omega}(Y-WB'-\mu I_S )\|^{2}_{F}+\lambda\|W\|_{*},
\end{equation}
for some $\lambda > 0 $.

\subsection{Coordinatewise-Soft-Impute (CSI) algorithm}\label{subsec: CSI algorithm}
Though the optimization problem \eqref{e:coordinate_soft_impute} above does not admit an explicit analytical solution, it is not hard to solve for one of $\mu$ or $W$ given the other one. For fixed $\mu$, the problem reduces to the optimization problem \eqref{e:algo1} with $\tilde Y = Y - \mu I_S$ and can be solved iteratively by the SLI algorithm~\cite{kidzinski2018longitudinal}, which we will also specify later in Algorithm~\ref{alg:coordinate-soft-impute}. For fixed $W$, we have
\begin{align}
    &\argmin_{\mu}\frac12\|P_{\Omega}(Y-WB'-\mu I_S )\|^{2}_{F}+\lambda\|W\|_{*} 
    \nonumber\\
    & = \argmin_{\mu}\frac12\|P_{\Omega}(-WB'-\mu I_S )\|^{2}_{F}
     = \argmin_{\mu}\frac12 \sum_{(i,j)\in \Omega \cap \Omega_S}((Y- WB')_{i,j} - \mu)^2,\label{eq:opt-events}
\end{align}
where $\Omega_S$ is the set of non-zero indices of $I_S$. Optimization problem \eqref{eq:opt-events} can be solved by taking derivative with respect to $\mu$ directly, which yields
\begin{equation}\label{eq:update_mu}
    \hat{\mu} = \frac{\sum_{(i,j)\in \Omega \cap \Omega_S}(Y- WB')_{i,j}}{|\Omega \cap \Omega_S|}.
\end{equation}

The clean formulation of \eqref{eq:update_mu} motivates us to the following Coordinatewise-Soft-Impute (CSI) algorithm (Algorithm \ref{alg:coordinate-soft-impute}):
At each iteration, CSI updates  $W_{\new}$ from $(W_{\old}, \mu_{\old})$ via soft singular value thresholding and then updates  $\mu_{\new}$ from $(W_{\new}, \mu_{\old})$ via \eqref{eq:update_mu}, finally it replaces the missing values of $Y$ based $(W_{\new}, \mu_{\new})$. 
In the definition, we define operator $S_\lambda$ as for any matrix $X$, $S_\lambda(X) := UD_\lambda V$, where $X=UDV$ is the SVD of $X$ and $D_\lambda = \diag((\max\set{d_{i}-\lambda,0})_{i=1}^K)$ is derived from the diagonal matrix $D = \diag((d_i)_{i=1}^K)$.
Note that if we set $\mu \equiv 0$ throughout the updates, then we get back to our base model SLI without treatment effect.
\begin{algorithm}[h]
	\caption{\textsc{Coordinatewise-Soft-Impute}\label{alg:coordinate-soft-impute}}
	\begin{enumerate}
		\item Initialize $W_{\old}\leftarrow$ all-zero matrix, $\mu_{\old} \leftarrow 0$.
		\item Repeat:
			\begin{enumerate}
				\item Compute $W_{\new} \leftarrow S_{\lambda}( (P_\Omega(Y - \mu_\old I_S) + P_\Omega^\perp(W_{\old}B'))B )$;
				\item Compute $\mu_{\new}  \leftarrow\frac{\sum_{(i,j)\in \Omega \cap \Omega_S}(Y- W_\new B')_{i,j}}{|\Omega \cap \Omega_S|}$;
            \item If $\max\left\{\frac{(\mu_{\new} - \mu_{\old})^2}{\mu_{\old}^2},\frac{\|W_{\new} - W_{\old}\|_F^2}{\|W_{\old}\|_F^2} \right\}< \varepsilon$, exit;
				\item Assign $W_{\old} \leftarrow W_{\new}$, $\mu_{\old} \leftarrow \mu_{\new}$.
			\end{enumerate}
			\item Output $\hat{W}_{\lambda} \leftarrow W_{\new}$,
$\hat{\mu}_{\lambda} \leftarrow \mu_{\new}$.
	\end{enumerate}
\end{algorithm}

\section{Convergence Analysis}\label{sec:convergence analysis}
In this section we study the convergence properties of Algorithm \ref{alg:coordinate-soft-impute}. Fix the regularization parameter $\lambda>0$, let $(\mu_\lambda^{(k)}, W_\lambda^{(k)})$ be the value of $(\mu,W)$ in the $k$'th iteration of the algorithm, the exact definition of which is provided below in \eqref{eq:mu_kW_k}.  We prove that Algorithm~\ref{alg:coordinate-soft-impute} reduces the loss function at each iteration and eventually converges to the global minimizer. 
\begin{theorem}\label{thm:convergence}
	The sequence $(\mu_\lambda^{(k)}, W_\lambda^{(k)})$  converges to a limit point $(\hat\mu_\lambda, \hat W_\lambda)$ which solves the optimization problem:
	\[
	(\hat\mu_\lambda, \hat W_\lambda) = \arg\min_{\mu,W}\frac 12\|P_{\Omega}(Y-WB'-\mu I_S )\|^{2}_{F}+\lambda\|W\|_{*}
	.
	\]
	Moreover, $(\hat\mu_\lambda, \hat W_\lambda)$ satisfies that
	\begin{equation}
	\hat W_\lambda = S_\lambda((P_\Omega(Y - \hat \mu_\lambda I_S) + P_\Omega^\perp(\hat W_\lambda B'))B)
	,\quad \hat \mu_\lambda= \frac{\sum_{(i,j)\in \Omega \cap \Omega_S}(Y- \hat W_\lambda B')_{i,j}}{|\Omega \cap \Omega_S|}.
	\label{eq:fixed_eq}
	\end{equation}
\end{theorem}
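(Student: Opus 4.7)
The plan is to proceed via a majorization-minimization argument, using the Soft-Impute surrogate for the $W$-block and closed-form minimization for the $\mu$-block. Write
\[
F(\mu, W) = \tfrac{1}{2}\|P_\Omega(Y - WB' - \mu I_S)\|_F^2 + \lambda\|W\|_*,
\]
which is jointly convex in $(\mu,W)$: the data-fit term is a convex quadratic in the linear image $WB' + \mu I_S$, and the nuclear norm is convex. Crucially, the non-smooth part depends only on $W$, which is the structural condition that makes alternating minimization well-behaved here.

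First I would establish monotone descent. For the $W$-update, introduce the surrogate
\[
Q_\lambda(W \mid W_\old; \mu_\old) = F(\mu_\old, W) + \tfrac{1}{2}\|P_\Omega^\perp((W - W_\old)B')\|_F^2,
\]
which satisfies $Q_\lambda(\cdot \mid W_\old; \mu_\old) \ge F(\mu_\old, \cdot)$ with equality at $W_\old$. Using orthogonality of $P_\Omega$ and $P_\Omega^\perp$, one rewrites $Q_\lambda$ as $\tfrac{1}{2}\|Z - WB'\|_F^2 + \lambda\|W\|_*$ with $Z = P_\Omega(Y - \mu_\old I_S) + P_\Omega^\perp(W_\old B')$; under the standing assumption $B'B = I_K$ of an orthonormal basis (as in SLI), the classical SVT lemma gives the explicit minimizer $W_\new = S_\lambda(ZB)$, matching Algorithm \ref{alg:coordinate-soft-impute}. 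This yields $F(\mu_\old, W_\new) \le F(\mu_\old, W_\old)$. For the $\mu$-update, $F(\cdot, W_\new)$ is a convex 1-D quadratic whose unique minimizer is \eqref{eq:update_mu}, giving $F(\mu_\new, W_\new) \le F(\mu_\old, W_\new)$. Composing the two inequalities produces a non-increasing sequence $\{F(\mu_\lambda^{(k)}, W_\lambda^{(k)})\}$, bounded below by $0$ and hence convergent; the bound $\lambda\|W_\lambda^{(k)}\|_* \le F(0,0)$ together with \eqref{eq:update_mu} bounds the iterates themselves, so convergent subsequences exist.

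Next, by continuity of $S_\lambda$ (from nonexpansiveness of SVT) and of the closed-form $\mu$ map, any subsequential limit $(\hat\mu_\lambda, \hat W_\lambda)$ satisfies the fixed-point system \eqref{eq:fixed_eq}. I would then argue that any such fixed point is a global minimizer: since $\hat W_\lambda$ minimizes $Q_\lambda(\cdot \mid \hat W_\lambda; \hat\mu_\lambda)$, its subgradient condition reads
\[
0 \in -\bigl(P_\Omega(Y - \hat\mu_\lambda I_S) + P_\Omega^\perp(\hat W_\lambda B') - \hat W_\lambda B'\bigr)B + \lambda\,\partial\|\hat W_\lambda\|_*,
\]
and the parenthesized term collapses to $P_\Omega(Y - \hat\mu_\lambda I_S - \hat W_\lambda B')$, recovering exactly the stationarity of the original $F$ in $W$. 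Stationarity in $\mu$ holds by construction of the update. Joint convexity of $F$ then promotes these first-order conditions to global optimality.

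The hardest step will be upgrading subsequential convergence to convergence of the whole sequence: $F$ is not strictly convex in $W$ (the nuclear norm is flat along zero singular values), so the set of minimizers need not be a singleton and one cannot just cite uniqueness. The plan here is to combine monotone descent of $F$ with the nonexpansiveness of the $W$- and $\mu$-maps evaluated at any optimal fixed point to obtain a Fej\'er-type monotonicity of the iterates with respect to the solution set; a standard Opial-style argument then forces the entire sequence to converge to a single limit, completing the proof of \eqref{eq:fixed_eq}.
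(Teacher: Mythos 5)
Your overall strategy is the same as the paper's: the same surrogate (the paper's $Q_\lambda$ in \eqref{eq:q} is exactly your $F(\mu_\old,W)+\tfrac12\|P_\Omega^\perp((W-W_\old)B')\|_F^2$ after splitting along $P_\Omega$ and $P_\Omega^\perp$), the same SVT lemma for the $W$-block, the same closed-form $\mu$-update, monotone descent, and a Fej\'er-type monotonicity of $\|W_\lambda^{(k)}-\hat W_\lambda\|_F$ relative to a cluster point to upgrade subsequential convergence to convergence of the whole sequence. That last step, which you flag as the hardest, is precisely how the paper's proof of Theorem~\ref{thm:convergence} concludes, and your worry about non-uniqueness is resolved the same way there: one only needs Fej\'er monotonicity relative to a single cluster point that is itself a fixed point, not relative to the whole solution set. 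Your treatment of global optimality is in fact more careful than the paper's: you check that stationarity of $Q_\lambda(\cdot\mid\hat W_\lambda;\hat\mu_\lambda)$ collapses to stationarity of $F$ in $W$ because $P_\Omega^\perp(\hat W_\lambda B')-\hat W_\lambda B'=-P_\Omega(\hat W_\lambda B')$, whereas the paper only gestures at convexity of $f_\lambda$.

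The one genuine gap is the assertion that ``by continuity of $S_\lambda$ and of the closed-form $\mu$ map, any subsequential limit satisfies the fixed-point system.'' Continuity alone does not give this: if $(\mu_\lambda^{(m_k)},W_\lambda^{(m_k)})\to(\hat\mu_\lambda,\hat W_\lambda)$, the update equations express these iterates in terms of $(\mu_\lambda^{(m_k-1)},W_\lambda^{(m_k-1)})$, and you may pass to the limit only if the \emph{shifted} subsequence converges to the same point. This requires asymptotic regularity, $\mu_\lambda^{(k)}-\mu_\lambda^{(k-1)}\to 0$ and $W_\lambda^{(k)}-W_\lambda^{(k-1)}\to 0$, which is the content of Lemma~\ref{lemma:decreasing_sequennce} and does not come for free. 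The paper obtains it by (i) noting that $f_\lambda(W_\lambda^{(k)},\mu_\lambda^{(k-1)})-f_\lambda(W_\lambda^{(k)},\mu_\lambda^{(k)})=\tfrac{|\Omega\cap\Omega_S|}{2}(\mu_\lambda^{(k)}-\mu_\lambda^{(k-1)})^2$, which vanishes because the interleaved objective values form a convergent monotone sequence; and (ii) bounding $\|W_\lambda^{(k)}-W_\lambda^{(k-1)}\|_F^2$, via nonexpansiveness of $S_\lambda$, by $|\Omega\cap\Omega_S|(\mu_\lambda^{(k-1)}-\mu_\lambda^{(k-2)})^2+\|P_\Omega^\perp((W_\lambda^{(k-1)}-W_\lambda^{(k-2)})B')\|_F^2$, where the second term is controlled by the surrogate gap $Q_\lambda-f_\lambda$, which also vanishes along the descent. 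All the ingredients for this are already present in your write-up (the descent chain and the surrogate gap), so the hole is fillable, but as written your Opial-style endgame rests on an unproved premise.
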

The proof of Theorem \ref{thm:convergence} relies on five technique Lemmas stated below. The detailed proofs of the lemmas and the proof to Theorem \ref{thm:convergence} are provided in Appendix \ref{appendix:proofs}. The first two lemmas are on properties of the nuclear norm shrinkage operator $S_\lambda$ defined in Section~\ref{subsec: CSI algorithm}.
\begin{lemma}\label{lemma:svt}
    Let $W$ be an $N \times K$ matrix and $B$ is an orthogonal $T \times K$ matrix of rank $K$. The solution to the optimization problem
	$\label{eq:lemma1}
	\min_W \frac{1}{2}\|Y - WB' \|_F^2 + \lambda\|W\|_*
	$
	is given by $\hat{W} = S_\lambda(YB)$ where $S_\lambda(YB)$ is defined in Section \ref{subsec: CSI algorithm}.
\end{lemma}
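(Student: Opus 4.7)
The plan is to exploit the semi-orthogonality of $B$ (namely $B'B = I_K$, which follows from ``orthogonal $T\times K$ of rank $K$'') to reduce the problem to the classical nuclear-norm-regularized least squares problem, whose closed-form solution via singular-value thresholding is well known.

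First I would decompose the Frobenius objective. Writing $Y - WB' = (Y - YBB') + (YB - W)B'$, the cross term is
\[
\mathrm{tr}\bigl((Y - YBB')\,B(YB - W)'\bigr),
\]
which vanishes because $(Y - YBB')B = YB - YB = 0$ using $B'B = I_K$. Hence
\[
\tfrac{1}{2}\|Y - WB'\|_F^2 = \tfrac{1}{2}\|Y - YBB'\|_F^2 + \tfrac{1}{2}\|(YB - W)B'\|_F^2.
\]
A second use of $B'B = I_K$ gives $\|(YB - W)B'\|_F^2 = \mathrm{tr}((YB-W)B'B(YB-W)') = \|YB - W\|_F^2$. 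Since $\|Y - YBB'\|_F^2$ is a constant in $W$, the original program is equivalent to
\[
\min_W \; \tfrac{1}{2}\|YB - W\|_F^2 + \lambda\|W\|_*.
\]

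Second, I would invoke the standard singular value soft-thresholding result (e.g.\ Cai--Cand\`es--Shen): for any matrix $Z$ the unique minimizer of $\tfrac{1}{2}\|Z - W\|_F^2 + \lambda\|W\|_*$ is $S_\lambda(Z)$. Applying this with $Z = YB$ yields $\hat W = S_\lambda(YB)$. For completeness and to keep the proof self-contained, I would verify this by the subgradient optimality condition: the nuclear norm subdifferential at a matrix with SVD $U\Sigma V'$ consists of $UV' + H$ with $U'H = 0$, $HV = 0$, and $\|H\|_{\mathrm{op}}\le 1$; writing $YB = U D V'$ and $S_\lambda(YB) = U D_\lambda V'$, one checks that $YB - S_\lambda(YB) = U\,\mathrm{diag}(\min(d_i,\lambda))\,V' = \lambda(UV' + H)$ for an admissible $H$, which is precisely the KKT condition, and strict convexity gives uniqueness.

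The main obstacle is really just the decomposition of the Frobenius term and keeping track that $B'B = I_K$ while $BB'$ is only a projection; once this reduction is in place, the lemma is a direct corollary of the classical soft-thresholding identity, and no iterative or variational argument is required.
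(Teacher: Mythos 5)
Your proof is correct and follows essentially the same route as the paper's: both reduce the objective to $\tfrac12\|YB - W\|_F^2 + \lambda\|W\|_*$ by showing the two losses differ only by a constant in $W$ (the paper via trace expansion, you via an orthogonal decomposition using $B'B = I_K$), and then both invoke the Cai--Cand\`es--Shen singular value thresholding result. Your added subgradient verification is a nice self-contained touch but does not change the substance of the argument.
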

\begin{lemma}\label{lemma:shrinkage}
	Operator $S_\lambda(\cdot)$ satisfies the following inequality for any two matrices $W_1$, $W_2$ with matching dimensions:
	\begin{align*}
	\|S_\lambda(W_1) - S_\lambda(W_2) \|_F^2 \leq \| W_1 - W_2 \|^2_F
	.
	\end{align*}
\end{lemma}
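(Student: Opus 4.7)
The plan is to recognize $S_\lambda$ as the proximal operator of the scaled nuclear norm and then invoke the standard non-expansiveness of proximal maps of convex functions. Specifically, taking $T=K$ and $B = I_K$ in Lemma \ref{lemma:svt} immediately yields the variational characterization
$$S_\lambda(W) \ =\ \argmin_{X}\tfrac{1}{2}\|X - W\|_F^2 + \lambda\|X\|_*.$$
Because the nuclear norm is a convex function on $\bR^{N\times K}$, the first-order optimality condition for this strictly convex program reads $W - S_\lambda(W) \in \lambda\,\partial\|S_\lambda(W)\|_*$, where $\partial\|\cdot\|_*$ denotes the convex subdifferential.

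Next I would set $X_i := S_\lambda(W_i)$ for $i=1,2$, so that $W_i - X_i \in \lambda\,\partial\|X_i\|_*$. The monotonicity of the subdifferential of a convex function then gives
$$\langle (W_1 - X_1) - (W_2 - X_2),\ X_1 - X_2\rangle \ \ge\ 0,$$
where $\langle A,B\rangle := \mathrm{tr}(A\trans B)$ is the Frobenius inner product. Rearranging this inequality and applying Cauchy--Schwarz yields
$$\|X_1 - X_2\|_F^2 \ \le\ \langle W_1 - W_2,\ X_1 - X_2\rangle \ \le\ \|W_1 - W_2\|_F\,\|X_1 - X_2\|_F.$$
Dividing by $\|X_1 - X_2\|_F$ (with the trivial case $X_1 = X_2$ handled separately) and squaring delivers the claimed bound; in fact, the first inequality above is the stronger statement that $S_\lambda$ is firmly non-expansive.

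The main obstacle is the first step: identifying $S_\lambda$ as the proximal map of $\lambda\|\cdot\|_*$. Lemma \ref{lemma:svt} makes this immediate, at the cost of absorbing the nontrivial ingredient---essentially von Neumann's trace inequality together with the $1$-Lipschitz character of $t\mapsto \max\{t-\lambda,0\}$---into the proof of that lemma. If one wished to bypass Lemma \ref{lemma:svt}, a self-contained but more computational alternative is to fix SVDs $W_i = U_i D_i V_i\trans$ and estimate $\|S_\lambda(W_1) - S_\lambda(W_2)\|_F^2$ directly via von Neumann's inequality and the scalar soft-thresholding Lipschitz bound. Once the proximal identification is in place, however, the remainder of the argument is a routine consequence of convex analysis.
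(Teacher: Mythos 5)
Your proof is correct. The paper does not actually prove this lemma---it defers to Lemma 3 of Mazumder, Hastie and Tibshirani (2010)---and the argument there is essentially the one you give: identify $S_\lambda$ as the proximal map of $\lambda\|\cdot\|_*$ (which, as you note, is already the content of the Cai--Cand\`es--Shen result invoked inside Lemma \ref{lemma:svt}) and conclude firm non-expansiveness from monotonicity of the subdifferential. So your write-up is a correct, self-contained version of the same approach, with the only dependence being the prox characterization that the paper likewise takes as given.
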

Define 
\begin{align}
f_\lambda(W,\mu) &= \frac 12\|P_{\Omega}(Y-WB'-\mu I_S )\|^{2}_{F}+\lambda\|W\|_{*},
\label{eq:helper-function}
\\
Q_\lambda(W|\tilde{W}, \mu) &= \frac 12\|P_{\Omega}(Y-\mu I_S) + P_{\Omega}^\perp (\tilde{W}B') - WB' \|^{2}_{F}+\lambda\|W\|_{*}
.
\label{eq:q}
\end{align}
Lemma~\ref{lemma:svt} shows that in the $k$-th step of Algorithm \ref{alg:coordinate-soft-impute}, $W^{(k)}$ is the minimizer for function $Q(\cdot|W^{(k-1)}, \mu^{(k)})$. The next lemma proves the sequence of loss functions $f_\lambda(W_\lambda^{(k)},\mu_\lambda^{(k)})$ is monotonically decreasing at each iteration.
\begin{lemma}\label{lemma:w,mu-sequence}
    For every fixed $\lambda \geq 0$, the $k$'th step of the algorithm $(\mu^{(k)}_\lambda, W_\lambda^{(k)})$ is given by
	\begin{align}
        & W_\lambda^{{(k)}} = \arg\min_W Q_\lambda(W|W_\lambda^{(k-1)}, \mu_\lambda^{(k-1)})
        &  \mu^{(k)}_\lambda = \frac{\sum_{(i,j)\in \Omega \cap \Omega_S}(Y- W_\lambda^{(k)} B')_{i,j}}{|\Omega \cap \Omega_S|}.
        \label{eq:mu_kW_k}
	\end{align}
    Then with any starting point $(\mu_\lambda^{(0)}, W_\lambda^{(0)})$, the sequence $\set{(\mu_\lambda^{(k)}, W_\lambda^{(k)})}_k$ satisfies 
	\[
    f_\lambda( W_\lambda^{(k)}, \mu_\lambda^{(k)}) \leq   f_\lambda(W_\lambda^{(k)},\mu_\lambda^{(k-1)}) \leq
    Q_\lambda(W_\lambda^{(k)} | W_\lambda^{(k-1)}, \mu_\lambda^{(k-1)} ) \leq
     f_\lambda(W_\lambda^{(k-1)},\mu_\lambda^{(k-1)}).\]
\end{lemma}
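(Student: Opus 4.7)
The plan is to read off the three inequalities from a single algebraic identity for $Q_\lambda$ combined with the two defining optimality properties of the coordinatewise updates in \eqref{eq:mu_kW_k}. The key step is to establish the Pythagorean-type identity
\begin{equation*}
Q_\lambda(W | \tilde W, \mu) \;=\; f_\lambda(W,\mu) \;+\; \tfrac{1}{2}\bigl\| P_\Omega^\perp\bigl((\tilde W - W)B'\bigr)\bigr\|_F^2
\end{equation*}
for arbitrary $W,\tilde W,\mu$. I would derive it by splitting $WB' = P_\Omega(WB') + P_\Omega^\perp(WB')$ inside the Frobenius norm defining $Q_\lambda$ in \eqref{eq:q}, rewriting the argument as $P_\Omega(Y - WB' - \mu I_S) + P_\Omega^\perp((\tilde W - W)B')$, and then applying the Pythagorean theorem to these two summands (their ranges are orthogonal complements of each other with respect to the Frobenius inner product on $\bR^{N \times T}$, since $P_\Omega$ and $P_\Omega^\perp$ are complementary orthogonal projections).

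From this identity the three inequalities follow essentially by inspection. The middle inequality $f_\lambda(W_\lambda^{(k)},\mu_\lambda^{(k-1)}) \leq Q_\lambda(W_\lambda^{(k)}|W_\lambda^{(k-1)},\mu_\lambda^{(k-1)})$ is immediate because the extra summand in the identity is non-negative. The right inequality uses that, by \eqref{eq:mu_kW_k}, $W_\lambda^{(k)}$ is a minimizer of $Q_\lambda(\cdot|W_\lambda^{(k-1)},\mu_\lambda^{(k-1)})$, so plugging the particular feasible point $W=W_\lambda^{(k-1)}$ into the inequality gives
\begin{equation*}
Q_\lambda(W_\lambda^{(k)}|W_\lambda^{(k-1)},\mu_\lambda^{(k-1)}) \leq Q_\lambda(W_\lambda^{(k-1)}|W_\lambda^{(k-1)},\mu_\lambda^{(k-1)}) = f_\lambda(W_\lambda^{(k-1)},\mu_\lambda^{(k-1)}),
\end{equation*}
where the final equality is the identity above with $\tilde W = W$, which makes the extra term vanish. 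The left inequality follows because, as already observed around \eqref{eq:opt-events}--\eqref{eq:update_mu}, the map $\mu \mapsto f_\lambda(W_\lambda^{(k)},\mu)$ reduces (up to a constant independent of $\mu$) to the one-dimensional quadratic $\tfrac12\sum_{(i,j)\in \Omega\cap\Omega_S}\bigl((Y - W_\lambda^{(k)}B')_{i,j} - \mu\bigr)^2$, whose unique global minimizer is exactly $\mu_\lambda^{(k)}$ by the second equation of \eqref{eq:mu_kW_k}.

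The only real obstacle is spotting the Pythagorean decomposition of $Q_\lambda$; everything afterwards is bookkeeping dictated by how the two coordinate updates in \eqref{eq:mu_kW_k} were defined. No smoothness or convergence argument is needed at this stage: the lemma is purely the monotone-descent statement one gets for free whenever the surrogate $Q_\lambda$ majorizes $f_\lambda$ everywhere and coincides with it on the diagonal $\tilde W = W$, and the two coordinate updates are exact minimizations in their respective variables.
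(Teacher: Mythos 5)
Your proposal is correct and follows essentially the same route as the paper's proof: the Pythagorean decomposition of $Q_\lambda$ into $f_\lambda$ plus the non-negative term $\tfrac12\|P_\Omega^\perp((\tilde W - W)B')\|_F^2$ is exactly the computation the paper performs to justify its middle inequality, and the left and right inequalities are obtained in both cases from the exact coordinatewise minimizations in $\mu$ and $W$ respectively, together with $Q_\lambda$ coinciding with $f_\lambda$ on the diagonal.
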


The next lemma proves that differences $(\mu_k - \mu_{k-1})^2$ and $\|W_\lambda^{(k)} - W_\lambda^{(k-1)}\|_F^2$ both converge to $0$.
\begin{lemma}\label{lemma:decreasing_sequennce}
	For any positive integer $k$, we have
		$\|W_\lambda^{(k+1)} - W_\lambda^{(k)}\|_F^2 \leq \|W_\lambda^{(k)} - W_\lambda^{(k-1)}\|_F^2 .$
	Moreover, 
	\begin{align*}
	&\mu_\lambda^{(k+1)} - \mu_\lambda^{(k)}\rightarrow 0,  \qquad W_\lambda^{(k+1} - W_\lambda^{(k)} \rightarrow 0 \qquad\text{as}\qquad k \rightarrow \infty.
	\end{align*}
\end{lemma}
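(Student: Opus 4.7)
The plan is to establish the monotonicity bound first and then the convergence-to-zero claims, exploiting \Cref{lemma:svt} (soft-thresholding identity), \Cref{lemma:shrinkage} (non-expansiveness of $S_\lambda$), and the disjoint-support decomposition of $P_\Omega$ and $P^\perp_\Omega$. Throughout I use the assumption $B'B = I_K$ implicit in \Cref{lemma:svt}, which yields $\|MB\|_F \le \|M\|_F$ and $\|WB'\|_F = \|W\|_F$.

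For monotonicity, write $W_\lambda^{(k+1)} = S_\lambda(\tilde X_k B)$ with $\tilde X_k := P_\Omega(Y - \mu_\lambda^{(k)} I_S) + P^\perp_\Omega(W_\lambda^{(k)} B')$. \Cref{lemma:shrinkage} then reduces the goal to bounding $\|\tilde X_k - \tilde X_{k-1}\|_F^2$, which by disjoint supports splits as $\|P_\Omega((\mu_\lambda^{(k-1)} - \mu_\lambda^{(k)}) I_S)\|_F^2 + \|P^\perp_\Omega((W_\lambda^{(k)} - W_\lambda^{(k-1)})B')\|_F^2$. The first term equals $|\Omega \cap \Omega_S|(\mu_\lambda^{(k)} - \mu_\lambda^{(k-1)})^2$, and plugging the explicit $\mu$-update into Cauchy--Schwarz bounds it by $\|P_{\Omega \cap \Omega_S}((W_\lambda^{(k-1)} - W_\lambda^{(k)})B')\|_F^2 \le \|P_\Omega((W_\lambda^{(k-1)} - W_\lambda^{(k)})B')\|_F^2$. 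Adding to the $P^\perp_\Omega$ piece reassembles the full norm $\|(W_\lambda^{(k)} - W_\lambda^{(k-1)})B'\|_F^2 = \|W_\lambda^{(k)} - W_\lambda^{(k-1)}\|_F^2$, which proves the contraction.

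For the convergence-to-zero part, \Cref{lemma:w,mu-sequence} gives that $f_\lambda(W_\lambda^{(k)}, \mu_\lambda^{(k)})$ is non-increasing and non-negative, so its single-step decrements are summable. I quantify each decrement in two pieces. The $\mu$-step exactly minimizes a quadratic in $\mu$ with leading coefficient $|\Omega \cap \Omega_S|/2$, so its decrement equals $\tfrac{|\Omega \cap \Omega_S|}{2}(\mu_\lambda^{(k)} - \mu_\lambda^{(k-1)})^2$. For the $W$-step, a direct expansion of the quadratic terms in the definitions yields the algebraic identity $Q_\lambda(W \mid \tilde W, \mu) - f_\lambda(W, \mu) = \tfrac12 \|P^\perp_\Omega((\tilde W - W)B')\|_F^2$; combined with $Q_\lambda(W_\lambda^{(k)} \mid W_\lambda^{(k-1)}, \mu_\lambda^{(k-1)}) \le f_\lambda(W_\lambda^{(k-1)}, \mu_\lambda^{(k-1)})$ from \Cref{lemma:w,mu-sequence} this gives a $W$-step decrement lower bound of $\tfrac12 \|P^\perp_\Omega((W_\lambda^{(k-1)} - W_\lambda^{(k)})B')\|_F^2$. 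Summability therefore forces $\mu_\lambda^{(k)} - \mu_\lambda^{(k-1)} \to 0$ and $\|P^\perp_\Omega((W_\lambda^{(k)} - W_\lambda^{(k-1)})B')\|_F \to 0$. To close, I revisit the first-part bound, which actually shows $\|W_\lambda^{(k+1)} - W_\lambda^{(k)}\|_F^2 \le |\Omega \cap \Omega_S|(\mu_\lambda^{(k)} - \mu_\lambda^{(k-1)})^2 + \|P^\perp_\Omega((W_\lambda^{(k)} - W_\lambda^{(k-1)})B')\|_F^2$; both pieces on the right have just been shown to vanish.

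The subtlety I expect to be the main obstacle is this last step: because $Q_\lambda$ coincides with $f_\lambda$ on observed entries, the one-step $f_\lambda$ decrement only controls $\|P^\perp_\Omega(\Delta W B')\|_F$ directly, and the on-support component has to be recovered indirectly by routing through $\mu_\lambda^{(k)} - \mu_\lambda^{(k-1)}$ via the first-part bound. Making this linkage work---using the Part~1 inequality both as the stated monotonicity and as the vehicle that converts the off-support $W$-differences into full ones once $\mu$ is known to converge---is the real substance of the argument.
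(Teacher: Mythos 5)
Your proposal is correct and follows essentially the same route as the paper's proof: non-expansiveness of $S_\lambda$ plus the disjoint-support decomposition and Cauchy--Schwarz on the $\mu$-update for the monotonicity bound, and the telescoping $f_\lambda$ decrements (the exact quadratic for the $\mu$-step, the $Q_\lambda - f_\lambda$ identity for the $W$-step) fed back into the Part~1 inequality for the convergence-to-zero claims. If anything you are slightly more careful than the paper in tracking the trailing multiplication by $B$ via $\|MB\|_F \le \|M\|_F$, but the argument is the same.
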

Finally we show that if the sequence $\set{(\mu_\lambda^{(k)}, W_\lambda^{(k)})}_k$, it has to converge to a solution of \eqref{eq:fixed_eq}.
\begin{lemma}\label{lemma:subsequence_converge}
    Any limit point $(\hat\mu_\lambda, \hat{W}_\lambda)$ of sequences $\set{(\mu_\lambda^{(k)}, W_\lambda^{(k)})}_k$ satisfies \eqref{eq:fixed_eq}.
\end{lemma}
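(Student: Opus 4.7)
The plan is to fix an arbitrary limit point $(\hat\mu_\lambda, \hat W_\lambda)$ of the iterates $\{(\mu_\lambda^{(k)}, W_\lambda^{(k)})\}_k$ and then pass to the limit along a convergent subsequence $\{k_j\}$ in the two update identities characterizing the CSI iteration. By Lemma \ref{lemma:w,mu-sequence}, these identities are
\[
W_\lambda^{(k+1)} = S_\lambda\bigl((P_\Omega(Y - \mu_\lambda^{(k)} I_S) + P_\Omega^\perp(W_\lambda^{(k)}B'))B\bigr),
\qquad
\mu_\lambda^{(k+1)} = \frac{\sum_{(i,j)\in \Omega \cap \Omega_S}(Y- W_\lambda^{(k+1)} B')_{i,j}}{|\Omega \cap \Omega_S|}.
\]
If both sides converge along $\{k_j\}$, the resulting equalities are precisely \eqref{eq:fixed_eq}, and we are done.

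To handle the left-hand sides, I would invoke Lemma \ref{lemma:decreasing_sequennce}: since $W_\lambda^{(k+1)} - W_\lambda^{(k)} \to 0$ and $\mu_\lambda^{(k+1)} - \mu_\lambda^{(k)} \to 0$, the shifted pair $(\mu_\lambda^{(k_j+1)}, W_\lambda^{(k_j+1)})$ also converges to $(\hat\mu_\lambda, \hat W_\lambda)$. For the right-hand sides, I need continuity of each updating map in $(\mu, W)$. The map $(\mu, W) \mapsto (P_\Omega(Y - \mu I_S) + P_\Omega^\perp(W B'))B$ is affine and hence continuous, and $S_\lambda$ is continuous (in fact $1$-Lipschitz in the Frobenius norm) by Lemma \ref{lemma:shrinkage}; composition then yields continuity of the $W$-update. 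The $\mu$-update is a linear functional of $W$ and is trivially continuous.

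Taking $j \to \infty$ in the two displayed updates and using these continuity facts, I obtain
\[
\hat W_\lambda = S_\lambda\bigl((P_\Omega(Y - \hat \mu_\lambda I_S) + P_\Omega^\perp(\hat W_\lambda B'))B\bigr),
\qquad
\hat \mu_\lambda = \frac{\sum_{(i,j)\in \Omega \cap \Omega_S}(Y- \hat W_\lambda B')_{i,j}}{|\Omega \cap \Omega_S|},
\]
which is exactly the fixed-point system \eqref{eq:fixed_eq}. I do not anticipate a serious technical obstacle; the only non-trivial ingredient is continuity of $S_\lambda$, which is an immediate consequence of the non-expansiveness already proved in Lemma \ref{lemma:shrinkage}. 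The heavy lifting in the convergence analysis sits in Lemmas \ref{lemma:w,mu-sequence} and \ref{lemma:decreasing_sequennce}; this final lemma merely closes the loop by identifying any limit point as a fixed point of the algorithm's update operators.
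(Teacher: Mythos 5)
Your proof is correct and takes essentially the same route as the paper: pick a convergent subsequence, use Lemma \ref{lemma:decreasing_sequennce} to show the adjacent iterates converge to the same limit, and pass to the limit in the update identities (the paper shifts the index backward to $m_k-1$ rather than forward to $k_j+1$, an immaterial difference). Your explicit remark that continuity of $S_\lambda$ follows from its non-expansiveness (Lemma \ref{lemma:shrinkage}) makes the limit-passing step slightly more careful than the paper's one-line ``taking limits on both sides.''
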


\section{Simulation study}\label{sec:simulation study}

In this section we illustrate properties of our Coordinatewise-Soft-Impute (CSI) algorithm via simulation study. The simulated data are generated from a mixed-effect model with low-rank covariance structure on $W$: 
\[Y = WB + \mu I_S + \Err,\]
for which the specific construction is deferred to Appendix \ref{sec:data_gen}.
Below we discuss the evaluation methods as well as the results from simulation study.

\subsection{Methods}
We compare the Coordinatewise-Soft-Impute (CSI) algorithm specified in Algorithm \ref{alg:coordinate-soft-impute} with the vanilla algorithm SLI (corresponding to $\hat{\mu} = 0$ in our notation) defined in 
\cite{kidzinski2018longitudinal} and the fPCA algorithm defined in \cite{james2000principal} based on mixed-effect model.  We train all three algorithms on the same set of basis functions and choose the tuning parameters $\lambda$ (for CSI and SLI) and $R$ (for fPCA) using a 5-fold cross-validation. Each model is then re-trained using the whole training set and tested on a held-out test set $\Omega_\textup{test}$ consisting 10\% of all data.

The performance is evaluated in two aspects.
First, for different combinations of the treatment effect $\mu$ and observation density $\rho$, we train each of the three algorithms on the simulated data set, and compute the relative squared error between the ground truth $\mu$ and estimation $\hat{\mu}$., i.e., 
$\RSE(\hat{\mu}) = (\hat\mu - \mu)^2/{\mu^2}$.
Meanwhile, for different algorithms applied to the same data set, we compare the mean square error between observation $Y$ and estimation $\hat{Y}$ over test set   $\Omega_{\textup{test}}$, 
namely,
\begin{equation}\label{eq:mse_def}
\MSE(\hat{Y}) = \frac{1}{|\Omega_{\textup{test}}|} 
\sum_{(i,j)\in \Omega_{\textup{test}}}
(Y_{ij}-\hat{Y}_{ij})^2
=
\frac{1}{|\Omega_{\textup{test}}|} 
\|P_{\Omega_\textup{test}}(Y)
 -P_{\Omega_\textup{test}}(\hat{Y})
\|_F^2
\end{equation}
We train our algorithms with all combinations of treatment effect $\mu \in \{0, 0.2, 0.4, \cdots, 5\}$, observation rate $\rho \in \{0.1, 0.3, 0.5\}$, and thresholding parameter $\lambda \in \{0, 1, \cdots, 4\}$ (for CSI or SLI) or rank  $R \in \{2,3,\cdots, 6\}$ (for fPCA). For each fixed combination of parameters, we implemented each algorithm $10$ times and average the test error.
\subsection{Results}
The results are presented in Table \ref{tab:Y_error_simulation} and Figure \ref{fig:err_simulation_plot}. From Table \ref{tab:Y_error_simulation}  and the left plot of Figure \ref{fig:err_simulation_plot}, we have the following findings:
\begin{enumerate}
\item CSI achieves better performance than SLI and fPCA, regardless of the treatment effect $\mu$ and observation rate $\rho$. Meanwhile SLI performs better than fPCA.
\item All three methods give comparable errors for smaller values of $\mu$. In particular, our introduction of treatment effect $\mu$ does not over-fit the model in the case of $\mu = 0$.
\item As the treatment effect $\mu$ increases, the performance of CSI remains the same whereas the performances of SLI and fPCA deteriorate rapidly. As a result, CSI outperforms SLI and fPCA by a significant margin for large values of $\mu$. For example, when $\rho = 0.1$, the $\MSE(\hat Y)$  of CSI decreases from  $72.3\%$ of SLI and $59.6\%$ of fPCA at $\mu = 1$ to $12.4\%$ of SLI and $5.8\%$ of fPCA at $\mu = 5$. 
\item All three algorithms suffer a higher $\MSE(\hat Y)$ with smaller observation rate $\rho$. The biggest decay comes from SLI with an average 118\% increase in test error from $\rho =0.5$ to $\rho=0.1$. The performances of fPCA and CSI remains comparatively stable among different observation rate with a 6\% and 12\% increase respectively. This implies that our algorithm is tolerant to low observation rate.
\end{enumerate}

To further investigate CSI's ability to estimate $\mu$, we plot the relative squared error of $\hat{\mu}$ using CSI with different observation rate in the right plot of Figure \ref{fig:err_simulation_plot}. As shown in Figure \ref{fig:err_simulation_plot}, regardless of the choice of observation rate $\rho$ and treatment effect $\mu$, $\RSE(\hat \mu)$ is always smaller than $1\%$ and most of the estimations achieves error less than $0.1\%$. Therefore we could conclude that, even for sparse matrix $Y$, the CSI algorithm could still give very accurate estimate of the treatment effect $\mu$.

\begin{figure}[t]
	\includegraphics[width= \textwidth, height = 0.15\textheight]{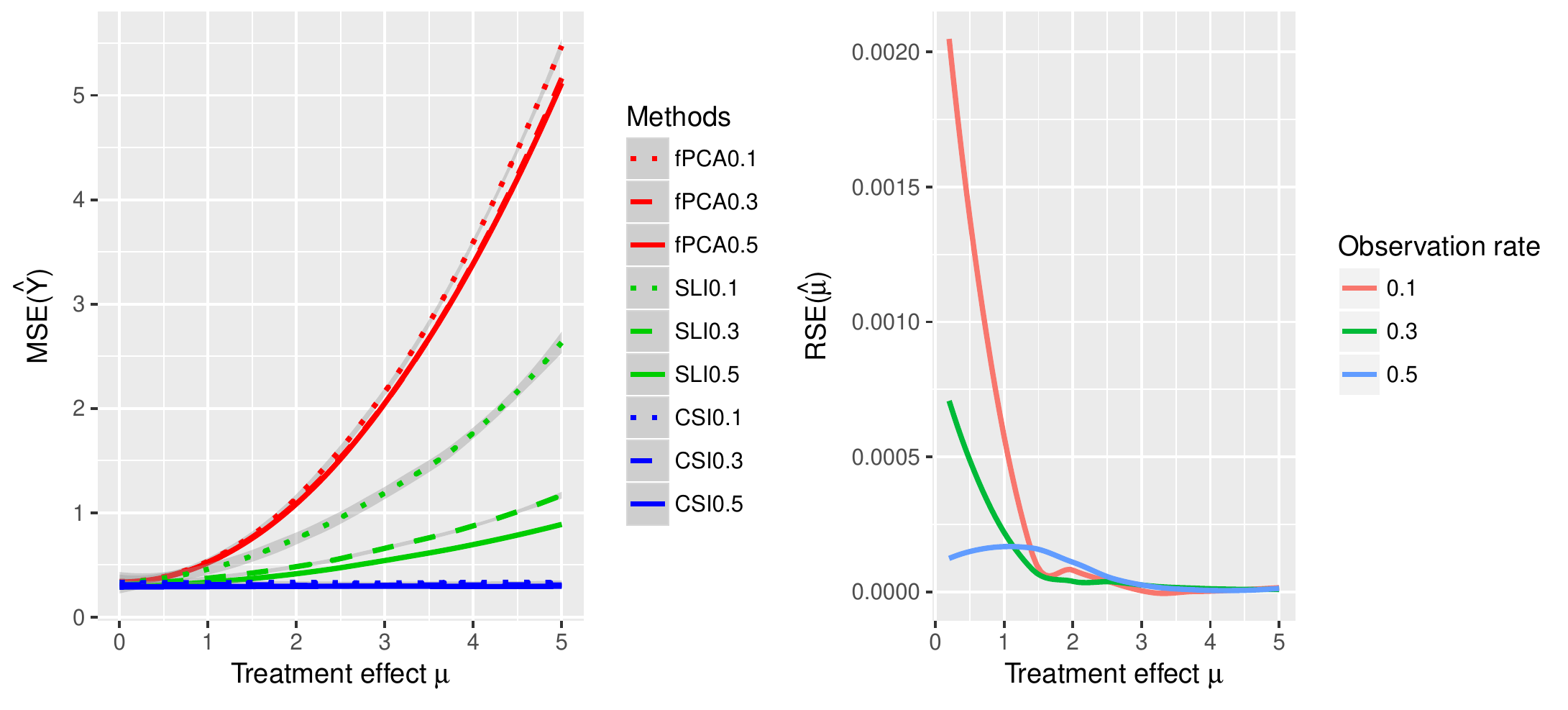}
	\caption{\textbf{Left: Comparisons between fPCA, SLI and CSI in estimating $Y$ with different observation rates.} Lines with colors red, green and blue correspond to fPCA, SLI and CSI respectively. Dotted, dashed and straight lines correspond to observation rate $\rho =$  $0.1$, $0.3$ and $0.5$ respectively.
	\textbf{Right: Relationship between relative squared error of $\hat{\mu}$ and  treatment effect $\mu$ using CSI with different observation rate.} Lines with colors red, green and blue correspond to observation rate $\rho =$ $0.1$, $0.3$ and $0.5$ respectively.}
	\label{fig:err_simulation_plot}
\end{figure}
\begin{table}[ht]
\begin{tabular}{ccccccccccc}
\hline
\multicolumn{2}{c}{Observation rate $\rho$} & \multicolumn{3}{c}{$0.1$}    & \multicolumn{3}{c}{$0.3$}   & \multicolumn{3}{c}{$0.5$}   \\ \hline
\multicolumn{2}{c}{Treatment effect $\mu$}  & $1$      & $2$     & $5$     & $1$     & $2$     & $5$     & $1$     & $2$     & $5$     \\ \hline
\multirow{3}{*}{$\MSE(Y)$}      & fPCA      & $0.521$  & $2.172$ & $5.455$ & $0.525$ & $2.039$ & $5.170$ & $0.525$ & $2.036$ & $5.166$ \\ \cline{2-11} 
                                & SLI       & $0.430$  & $1.162$ & $2.561$ & $0.379$ & $0.658$ & $1.203$ & $0.341$ & $0.543$ & $0.893$ \\ \cline{2-11} 
                                & CSI       & $ 0.311$ & $0.306$ & $0.318$ & $0.314$ & $0.297$ & $0.320$ & $0.294$ & $0.299$ & $0.295$ \\ \hline
\end{tabular}

\caption{Comparisons between fPCA, SLI and CSI under different values of $\rho$ and $\mu$. \label{tab:Y_error_simulation}}
\end{table}

\section{Data Study}\label{sec:data study}
 In this section, we apply our methods to real dataset on the progression of motor impairment and gait pathology among children with Cerebral Palsy (CP) and evaluate the effect of orthopaedic surgeries.
 
Cerebral palsy is a group of permanent movement disorders that appear in early childhood. 
Orthopaedic surgery plays a major role in minimizing gait impairments related to CP \cite{mcginley2012single}.
However, it could be hard to correctly evaluate the outcome of a surgery. 
For example, the seemingly positive outcome of a surgery may actually due to the natural improvement during puberty.
Our objective is to single out the effect of surgeries from the natural progression of disease and use that extra piece of information for better predictions.

\subsection{Data and Method}
We analyze a data set of Gillette Children’s Hospital patients, visiting the clinic between 1994 and 2014, age ranging between 4 and 19 years, mostly diagnosed with Cerebral Palsy. The data set contains 84 visits of 36 patients without gait disorders and 6066 visits of 2898 patients with gait pathologies.
Gait Deviation Index (GDI), one of the most commonly adopted metrics for gait functionalities \cite{schwartz2008gait}, was measured and recorded at each clinic visit along with other data such as birthday, subtype of CP, date and type of previous surgery and other medical results.

Our main objective is to model individual disease progression quantified as GDI values.
Due to insufficiency of data, we model surgeries of different types and multiple surgeries as a single additive effect on GDI measurements following the methodology from Section \ref{sec:description of CSI}. We test the same three methods CSI, SLI and fPCA as in Section~\ref{sec:simulation study}, and compare them to two benchmarks---the population mean of all patients (pMean) and the average GDI from previous visits of the same patient (rMean).

All three algorithms was trained on the spline basis of $K=9$ dimensions evaluated at a grid of $T=51$ points, with regularization parameters $\lambda\in\set{20,25,...,40}$ for CSI and SLI and rank constraints $r\in\set{2,\dots,6}$ for fPCA.
To ensure sufficient observations for training, we cross validate and test our models on patients with at least 4 visits and use the rest of the data as a common training set. The effective size of 2-fold validation sets and test set are 5\% each. 
We compare the result of each method/combination of parameters using the mean square error of GDI estimations on held-out entries as defined in \eqref{eq:mse_def}.

\subsection{Results}
We run all five methods on the same training/validation/test set for 40 times and compare the mean and sd of test-errors. The results are presented in Table~\ref{tab:gdi_data} and   Figure~\ref{fig:gdi_data}.
Compared with the null model pMean (Column 2 of Table~\ref{tab:gdi_data}), fPCA gives roughly the same order of error; CSI, SLI and rowMean provide better predictions, achieving 62\%, 66\% and 73\% of the test errors respectively. In particular, our algorithm CSI improves the result of vanilla model SLI by 7\%, it also provide a stable estimation with the smallest sd across multiple selections of test sets.

\begin{figure}[t]
\begin{floatrow}
    \centering
\capbtabbox{%
    \begin{tabular}{rrrr}
    \hline
         & mean & scaled mean& sd  \\
    \hline
         CSI & 74.28 & 0.62 & 8.90\\
         SLI & 79.92  &  0.66& 9.22\\
         fPCA & 127.73& 1.06& 13.54\\
         rMean & 87.26& 0.73& 8.96\\
         pMean & 119.80 & 1.00& 12.84\\
     \hline
    \end{tabular}
}{
    \caption{Test error on GDI dataset\label{tab:gdi_data}}
}
\ffigbox{%
\includegraphics[width = 0.35\textwidth, ]{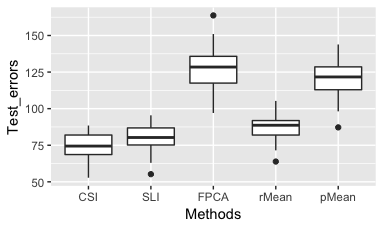}
}
{
\caption{Box plot for test errors\label{fig:gdi_data}}
}
\end{floatrow}
\end{figure}

We take a closer look at the low-rank decomposition of disease progression curves provided by algorithms. Fix one run of algorithm CSI with $\lambda_\star= 30$, there are 6 non-zero singular value vectors, which we will refer as principal components. 
We illustrate the top 3 PCs scaled with corresponding singular values in Figure~\ref{fig:gdi_top_PC}. 
The first PC recovers the general trend that gait disorder develops through age 1-10 and partially recovers during puberty. The second and third PC reflects fluctuations during different periods of child growth. By visual inspection, similar trends can be find in the top components of SLI and fPCA as well.

An example of predicted curve from patient ID 5416 is illustrated in Figure~\ref{fig:gdi_one_patient} , where the blue curve represents the prediction without estimated treatment effect $\hat{\mu} = 4.33$, green curve the final prediction and red dots actual observations. It can be seen that the additive treatment effect helps to model the sharp difference between the exam before exam (first observation) and later exams.

\begin{figure}[t]
\centering
\begin{subfigure}[t]{0.35\textwidth}
\includegraphics[width=\textwidth]{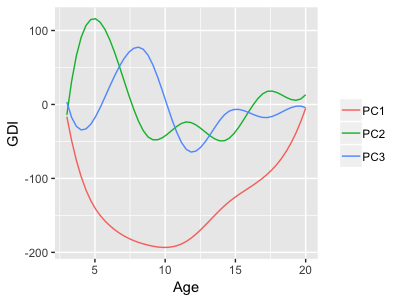}
\caption{Top 3 PCs from CSI algorithm\label{fig:gdi_top_PC}}
\end{subfigure}
\begin{subfigure}[t]{0.45\textwidth}
\includegraphics[width=\textwidth]{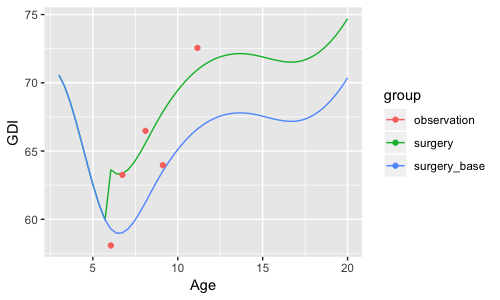}
\caption{Predicted curve of patient ID 5416\label{fig:gdi_one_patient}}
\end{subfigure}
\caption{Low-rank decomposition of disease progression curves}
\end{figure}

\section{Conclusion and Future Work}\label{sec:future work}
In this paper, we propose a new framework in modeling the effect of treatment events in disease progression and prove a corresponding algorithm CSI. To the best of our knowledge, it's the first comprehensive model that explicitly incorporates the effect of treatment events.
We would also like to mention that, although we focus on the case of disease progression in this paper, our framework is quite general and can be used to analyze data in any disciplines with sparse observations as well as external effects. 

There are several potential extensions to our current framework. Firstly, our framework could be extended to more complicated settings. In our model, treatments have been characterized as the binary matrix $I_S$ with a single parameter $\mu$. In practice, each individual may take different types of surgeries for one or multiple times.
Secondly, the treatment effect may be correlated with the latent variables of disease type, and can be estimated together with the random effect $w_i$. Finally, our framework could be used to evaluate the true effect of a surgery. A natural question is: does surgery really help? CSI provides estimate of the surgery effect $\mu$, it would be interesting to design certain statistical hypothesis testing procedure to answer the proposed question.

Though we are convinced that our work will not be the last word in estimating the disease progression, we hope our idea is useful for further research and we hope the readers could help to take it further.
\bibliographystyle{plain}   
\bibliography{refs}
\newpage
\appendix

\section{Soft-Longitudinal-Impute (SLI) algorithm} \label{appendix:SLI}

\begin{algorithm}
	\caption{\textsc{Soft-Longitudinal-Impute}\label{alg:soft-impute}}
	\begin{enumerate}
		\item Initialize $W^{\old} \leftarrow $ all-zero matrix.
		\begin{enumerate}
			\item Repeat:
			\begin{enumerate}
				\item Compute $W^{\new} \leftarrow S_{\lambda}( (P_\Omega(Y) + P_\Omega^\perp(W^{old}B'))B )$
				\item If $\frac{\|W^{\new} - W^{\old}\|_F^2}{\|W^{\old}\|_F^2} < \varepsilon$ exit
				\item Assign $W^{\old} \leftarrow W^{\new}$
			\end{enumerate}
		\end{enumerate}
		\item Output $\hat{W}_{\lambda} \leftarrow W^{\new}$
	\end{enumerate}
\end{algorithm}
\section{Proofs}\label{appendix:proofs}
\begin{proof}[Proof of Lemma \ref{lemma:svt}]
	Note that the solution of the  optimization problem
	\begin{equation}\label{eq:svt}
	\min_A \frac{1}{2}\|Z - A \|_F^2 + \lambda\|A\|_*
	\end{equation}
	is given by $\hat{A} = S_\lambda(Z)$ (see \cite{cai1956singular} for a proof). Therefore it suffices to show the minimizer of the optimization problem \eqref{eq:svt} is the same as the minimizer of the following problem:
		\begin{equation*}
	\min_W \frac{1}{2}\|YB - W \|_F^2 + \lambda\|W\|_*.
	\end{equation*}
	Using the fact that  $\|A\|_F^2 = \text{Tr}(AA') $ and $B'B = \mathbb{I}_K$, we have 
\begin{align*}
 \arg\min_W \frac{1}{2}\|YB - W \|_F^2 + \lambda\|W\|_* & = \arg\min_W \frac{1}{2} (\text{Tr}(YBB'Y') + \text{Tr}(WW') - 2 \text{Tr}(YBW')) + \lambda\|W\|_*  \\
 & = \arg\min_W \frac{1}{2} (\text{Tr}(WW') - 2 \text{Tr}(YBW')) + \lambda\|W\|_*
 .
\end{align*}
On the other hand 
\begin{align*}
\arg\min_W \frac{1}{2}\|Y - WB' \|_F^2 + \lambda\|W\|_* & = \arg\min_W \frac{1}{2} (\text{Tr}(YY') + \text{Tr}(WW') - 2 \text{Tr}(YBW')) + \lambda\|W\|_*  \\
 & = \arg\min_W \frac{1}{2} (\text{Tr}(WW') - 2 \text{Tr}(YBW')) + \lambda\|W\|_* \\
 & =  \arg\min_W \frac{1}{2}\|YB - W \|_F^2+ \lambda\|W\|_*  \\
 & =  S_\lambda(YB),
\end{align*}
as desired.
\end{proof}

\begin{proof}[Proof of Lemma \ref{lemma:shrinkage}]
	We refer the readers to the proof in \cite[Section 4, Lemma 3]{mazumder2010spectral}.
\end{proof}
\begin{proof}[Proof of Lemma \ref{lemma:w,mu-sequence}]
	First we argue that $\mu^{(k)}_\lambda = \arg\min_\mu f_\lambda(W^{(k)}_\lambda,\mu)$ and the first inequality immediately follows. We have
	\begin{align*}
	\arg\min_\mu f_\lambda(W^{(k)}_\lambda,\mu) & = \arg\min_\mu \|P_{\Omega}(Y-W_\lambda^{(k)}B'-\mu I_S )\|^{2}_{F} \\ 
	& = \arg\min_\mu \sum_{(i,j)\in \Omega \cap \Omega_S} ((Y- W_\lambda^{(k)}B')_{i,j} - \mu)^2.
	\end{align*}
	Taking derivative with respect to $\mu$ directly gives $\mu^{(k)}_\lambda  = \arg\min_\mu f_\lambda(W^{(k)}_\lambda,\mu)$, as desired.
	
	For the rest two inequalities, notice that 
	\begin{align}
	 f_\lambda( W_\lambda^{(k)}, \mu_\lambda^{(k-1)})  
	 &=  \frac 12\|P_{\Omega}(Y-W_\lambda^{(k)}B'-\mu_\lambda^{(k-1)} I_S )\|^{2}_{F}+\lambda\|W_\lambda^{(k)}\|_{*} \nonumber\\
	 & \leq\label{eq:lemma2_inequality1} \frac 12\|P_{\Omega}(Y-\mu_\lambda^{(k-1)} I_S ) + P_{\Omega}^\perp (W_\lambda^{(k-1)}B') - W_\lambda^{(k)}B' \|^{2}_{F}+\lambda\|W_\lambda^{(k)}\|_{*}\\
	 & = Q_\lambda(W_\lambda^{(k)} | W_\lambda^{(k-1)}, \mu_\lambda^{(k-1)} )\nonumber\\
	 & \leq \label{eq:lemma2_inequality2} Q_\lambda(W_\lambda^{(k-1)} | W_\lambda^{(k-1)}, \mu_\lambda^{(k-1)} ) \\
	 & =  \frac 12\|P_{\Omega}(Y-W_\lambda^{(k-1)}B'-\mu_\lambda^{(k-1)} I_S )\|^{2}_{F}+\lambda\|W_\lambda^{(k-1)}\|_{*} \nonumber\\
	 & =  f_\lambda( W_\lambda^{(k-1)}, \mu_\lambda^{(k-1)}). \nonumber
	\end{align}
Here the \eqref{eq:lemma2_inequality1} holds because we have
\begin{align*}
&\frac 12\|P_{\Omega}(Y-\mu_\lambda^{(k-1)} I_S ) + P_{\Omega}^\perp (W_\lambda^{(k-1)}B') - W_\lambda^{(k)}B' \|^{2}_{F} \\
 =&  \frac 12\|P_{\Omega}(Y-\mu_\lambda^{(k-1)} I_S - W_\lambda^{(k)}B' ) + P_{\Omega}^\perp (W_\lambda^{(k-1)}B' - W_\lambda^{(k)}B' )  \|^{2}_{F} \\
 = &  \frac 12\|P_{\Omega}(Y-\mu_\lambda^{(k-1)} I_S - W_\lambda^{(k)}B' ) \|^2_F+  \frac 12\|P_{\Omega}^\perp (W_\lambda^{(k-1)}B' - W_\lambda^{(k)}B' )  \|^{2}_{F} \\
 \geq & \frac 12\|P_{\Omega}(Y-\mu_\lambda^{(k-1)} I_S - W_\lambda^{(k)}B' ) \|^2_F.
\end{align*}
\eqref{eq:lemma2_inequality2} follows from the fact that $W_\lambda^{{(k)}} = \arg\min_W Q_\lambda(W|W_\lambda^{(k-1)}, \mu_\lambda^{(k-1)})$.
\end{proof}
\begin{proof}[Proof of Lemma \ref{lemma:decreasing_sequennce}]
First we analyze the behavior of $\{\mu_\lambda^{(k)}\}$,
	\begin{align*}
	f_\lambda(W_\lambda^{(k)},\mu_\lambda^{(k-1)})  -f_\lambda( W_\lambda^{(k)}, \mu_\lambda^{(k)})  & =   \frac 12\|P_{\Omega}(Y-W_\lambda^{(k)}B'-\mu_\lambda^{(k-1)} I_S )\|^{2}_{F} \\
	&-   \frac 12\|P_{\Omega}(Y-W_\lambda^{(k)}B'-\mu_\lambda^{(k)} I_S )\|^{2}_{F} \\
	& = \frac{|S\cap \Omega_S|}{2}(\mu_\lambda^{(k)} - \mu_\lambda^{(k-1)})^2.
	\end{align*}
Meanwhile, the sequence $(\cdots,  f_\lambda(W_\lambda^{(k-1)},\mu_\lambda^{(k-1)}), f_\lambda(W_\lambda^{(k)},\mu_\lambda^{(k-1)}), f_\lambda( W_\lambda^{(k)}, \mu_\lambda^{(k)}), \cdots)$ is decreasing and lower bounded by $0$ and therefore converge to a non-negative number, yielding the differences 	$f_\lambda(W_\lambda^{(k)},\mu_\lambda^{(k-1)})  -f_\lambda( W_\lambda^{(k)}, \mu_\lambda^{(k)})  \rightarrow 0$ as $k\rightarrow \infty$. Hence 
\begin{align}
\label{eq:mu_sequence}
\mu_\lambda^{(k)} - \mu_\lambda^{(k-1)}\rightarrow 0,
\end{align} as desired.

The sequence $\{W_\lambda^{(k)}\}$ is slightly more complicated, direct calculation gives
\begin{align} 
\|W_\lambda^{(k)} - W_\lambda^{(k-1)}\|_F^2 &= \|S_\lambda(P_{\Omega}(Y-\mu_\lambda^{(k-1)} I_S ) + P_{\Omega}^\perp (W_\lambda^{(k-1)}B')) \nonumber\\
&- S_\lambda(P_{\Omega}(Y-\mu_\lambda^{(k-2)} I_S ) + P_{\Omega}^\perp (W_\lambda^{(k-2)}B')) \|_F^2 \nonumber\\
& \leq \label{eq:lemma4_inequality1}\|P_{\Omega}(Y-\mu_\lambda^{(k-1)} I_S ) + P_{\Omega}^\perp (W_\lambda^{(k-1)}B') \\
&- P_{\Omega}(Y-\mu_\lambda^{(k-2)} I_S ) - P_{\Omega}^\perp (W_\lambda^{(k-2)}B') \|_F^2 \nonumber\\
& \label{eq:lemma4_equality1}= |\Omega\cap \Omega_S|(\mu_\lambda^{(k-1)} - \mu_\lambda^{(k-2)})^2 + \|P_{\Omega}^\perp (W_\lambda^{(k-1)} B'- W_\lambda^{(k-2)}B')\|_F^2,
\end{align}
where \eqref{eq:lemma4_inequality1} follows from Lemma \ref{lemma:shrinkage}, \eqref{eq:lemma4_equality1} can be derived pairing the $4$ terms according to $P_\Omega$ and $P_\Omega^\perp$.

By definition of $\mu_\lambda^{(k)}$, we have
\begin{align}
|\Omega\cap \Omega_S|(\mu_\lambda^{(k-1)}  - \mu_\lambda^{(k-2)})^2
& = \frac 1{|\Omega\cap \Omega_S|} \left(\sum_{(i,j)\in \Omega\cap \Omega_S} (W_\lambda^{(k-1)} B'- W_\lambda^{(k-2)}B')_{i,j}\right)^2\nonumber\\
& \leq  \label{eq:lemma4_inequality2}\| P_\Omega(W_\lambda^{(k-1)}B' - W_\lambda^{(k-2)}B')\|_F^2,
\end{align}
where \eqref{eq:lemma4_inequality2} follows from the Cauchy-Schwartz inequality.

Combining \eqref{eq:lemma4_equality1} with  \eqref{eq:lemma4_inequality2}, we get \[\|W_\lambda^{(k)} - W_\lambda^{(k-1)}\|_F^2 \leq \|W_\lambda^{(k-1)}B' - W_\lambda^{(k-2)}B'\|_F^2 = \|W_\lambda^{(k-1)} - W_\lambda^{(k-2)}\|_F^2.\]
Now we are left to prove that the difference sequence $\{W_\lambda^{(k)}- W_\lambda^{(k-1)}\}$ converges to zero. Combining \eqref{eq:mu_sequence} and \eqref{eq:lemma4_inequality2} it suffices to prove that $\|P_{\Omega}^\perp (W_\lambda^{(k-1)} B'- W_\lambda^{(k-2)}B')\|_F^2 \rightarrow 0$.  We have
\begin{align*}
f_\lambda(W_\lambda^{(k-1)}, \mu_\lambda^{(k-2)}) - Q_\lambda(W_\lambda^{(k-1)}| W_\lambda^{(k-2)}, \mu_\lambda^{(k-2)})  = - \|P_{\Omega}^\perp (W_\lambda^{(k-1)} B'- W_\lambda^{(k-2)}B')\|_F^2,
\end{align*}
and the left hand side converges to $0$ because
\begin{align*}
	0 &\geq f_\lambda(W_\lambda^{(k-1)}, \mu_\lambda^{(k-2)}) - Q_\lambda(W_\lambda^{(k-1)}|W_\lambda^{(k-2)}, \mu_\lambda^{(k-2)})\\
	&\geq f_\lambda(W_\lambda^{(k-2)}, \mu_\lambda^{(k-2)}) -  f_\lambda(W_\lambda^{(k-1)}, \mu_\lambda^{(k-2)}) \rightarrow 0,
\end{align*}
which completes the proof.
\end{proof}

\begin{proof}[Proof of Lemma \ref{lemma:subsequence_converge}]
	Let $(\mu_\lambda^{m_k}, W_\lambda^{m_k})\rightarrow (\hat\mu_\lambda, \hat W_\lambda)$, then Lemma \ref{lemma:decreasing_sequennce} gives $(\mu_\lambda^{m_k -1}, W_\lambda^{m_k -1})\rightarrow (\hat\mu_\lambda, \hat W_\lambda)$. Since we have
	\begin{align*}
	& W^{(m_k)}_\lambda = S_\lambda((P_\Omega(Y -\mu_\lambda^{(m_k- 1)} I_S) + P_\Omega^\perp(W^{(m_k - 1)}_\lambda B'))B)
	,\\
	&\mu_\lambda^{(m_k)} = \frac{\sum_{(i,j)\in \Omega \cap \Omega_S}(Y- W^{(m_k-1)}_\lambda B')_{i,j}}{|\Omega \cap \Omega_S|}
	.
	\end{align*}
	Taking limits on both sides gives us the desire result.
\end{proof}
\begin{proof}[Proof of Theorem \ref{thm:convergence}]
Let $(\hat\mu_\lambda, \hat W_\lambda)$ be one limit point then we have:
\begin{align}
\|\hat W_\lambda - W_\lambda^{(k)}\|_F & 
=\label{eq:thm1_equality1} \|  S_\lambda((P_\Omega(Y - \hat\mu_\lambda I_S) + P_\Omega^\perp(W_\lambda B'))B) \\
&-  S_\lambda((P_\Omega(Y - \hat\mu_\lambda^{(k-1)} I_S) + P_\Omega^\perp(W_\lambda^{(k-1)} B'))B)\|_F^2 \nonumber\\
&\label{eq:thm1_inequality1}\leq  | \Omega \cap \Omega_S| (\hat\mu_\lambda - \mu_\lambda^{(k-1)})^2 + \|P_\Omega^\perp ((\hat W_\lambda - W_\lambda^{(k-1)})B')\|_F^2,
\end{align}
here \eqref{eq:thm1_equality1} uses Lemma \ref{lemma:subsequence_converge} and \eqref{eq:thm1_inequality1} uses Lemma \ref{lemma:shrinkage}.  Meanwhile, 
\begin{align}\label{eq:mean_term}
 | \Omega \cap \Omega_S| (\hat\mu_\lambda - \mu_\lambda^{(k-1)})^2 = \frac{\sum_{(i,j)\in \Omega \cap \Omega_S}((\hat W_\lambda - W_\lambda^{(k-1)}) B')^2_{i,j}}{|\Omega \cap \Omega_S|} \leq  \|P_\Omega((\hat W_\lambda - W_\lambda^{(k-1)})B')\|_F^2.
\end{align}
Combining \eqref{eq:thm1_inequality1} and \eqref{eq:mean_term}, we have
\begin{align*}
\|\hat W_\lambda - W_\lambda^{(k)}\|_F  \leq \|\hat W_\lambda - W_\lambda^{(k-1)}\|_F.
\end{align*}
Hence the sequence $\|\hat W_\lambda - W_\lambda^{(k)}\|_F $ is monotonically decreasing and has a limit. But since there exists $W_\lambda^{(m_k)}$ converging to $\hat W_\lambda$, the limit equals $0$, which proves $W_\lambda^{(k)} \rightarrow \hat W_\lambda$, $\mu_\lambda^{(k)} \rightarrow \hat\mu$.

Therefore we have proved the sequence $(\mu_\lambda^{(k)}, W_\lambda^{(k)})$ always converges. Meanwhile, notice that the loss function $f_\lambda(W,\mu)$ is a convex function with respect to $(W,\mu)$ and Lemma \ref{lemma:w,mu-sequence} guarantees that the sequence $f_\lambda(W_\lambda^{(k)},\mu_\lambda^{(k)})$ converges. Thus we have proved that the pair $(\mu_\lambda^{(k)}, W_\lambda^{(k)})$ minimizes the function $f_\lambda(W,\mu)$.
\end{proof}

\section{Data generation}\label{sec:data_gen}
Let $G$ be the grid of $T$ equidistributed points and let $B$ be the basis of $K$ spline functions evaluated on grid $G$.
We will simulate the $N\times K$ observation matrix $Y$ with three parts
\[
Y = WB + \mu I_S + \Err,
\]
where $W$ follows a mixture-Gaussian distribution with low rank structure, $I_S$ is the treatment matrix with uniformly distributed starting time and $\Err$ represents the i.i.d.~measurement error. 
The specific procedures is described below.
\begin{enumerate}
	\item Generating $W$ given parameters $\kappa\in(0,1), r_1,r_2\in \RR, {s}_1,{s}_2\in\RR_{\ge 0}^K$:
	\begin{enumerate}
		\item Sample two $K\times K$ orthogonal matrices $V_1,V_2$ via singular-value-decomposing two random matrix. 
		\item Sample two unit length $K$ vectors $\vec{\gamma}_1,\vec{\gamma}_2$ via normalizing i.i.d.~normal samples.
		\item Draw vector $\vec{t}\in\RR^N$ from i.i.d.~Bernoulli$(\kappa)$ samples. Denote the all one vector by $\vec{1}$.
		\item Draw $N\times K$ matrices $U_1,U_2$ from i.i.d.~standard normal random variables.
		\item Set
		\[
		W \leftarrow \vec{t} \cdot \Big[
		r_1 \vec{\gamma}_1 + U_1 \diag[\sqrt{s_1}] V_1
		\Big]
		+ (\vec{1}-\vec{t}) \cdot \Big[
		r_2 \vec{\gamma}_2 + U_2 \diag[\sqrt{s_2}] V_2
		\Big]
		,
		\]
		where $\diag[s]$ is the diagonal matrix with diagonal elements $s$, ``$\cdot$'' represents coordinatewise multiplication, and we are recycling $\vec{t},\vec{1}-\vec{t}$ and $r_i\vec{\gamma_i}$ to match the dimension.
	\end{enumerate}
	\item Generating $I_S$ given parameter $p_\textup{tr}\in(0,1)$.
	\begin{enumerate}
		\item For each $k=1,\dots,N$, sample $T_k$ uniformly at random from $\set{1,\dots, \lfloor T/p_\textup{tr}\rfloor}$.
		\item Set $I_S \leftarrow (\Ind{j\ge T_i})_{1\le i\le N,1\le j\le T}$.
	\end{enumerate}
	\item Given parameter $\epsilon\in\RR_{\ge 0}$, $\Err$ is drawn from from i.i.d.~Normal$(0,\epsilon^2)$ samples.
	\item Given parameter $\mu\in\RR$, let $Y_0 \leftarrow WB+\mu I_S + \Err$.
	\item Given parameter $\rho\in(0,1)$, drawn 0-1 matrix $I_\Omega$ from i.i.d.~Bernoulli($\rho$) samples. Let $\Omega$ denote the set of non-zero entries of $I_\Omega$, namely, the set of observed data. Set
	\[
	Y\leftarrow (Y_{ij})_{1\le i\le N, 1\le j\le T}
	,\quad
	\textup{where }
	Y_{ij}=\begin{cases}
	(Y_0)_{ij} & \textup{if }(I_\Omega)_{ij} = 1\\
	\textsc{na} & \textup{otherwise}
	\end{cases}
	.
	\]
\end{enumerate}
In actual simulation, we fix the auxiliary parameters as follows,
\begin{align*}
& K = 7, T = 51, N = 500,\\
& \kappa = 0.33, r_1 = 1, r_2 = 2,\\
&s_1 = [1,0.4,0.005,0.1\exp(-3),...,0.1\exp(-K+1)],\\
&s_2 = [1.3,0.2,0.005,0.1\exp(-3),...,0.1\exp(-K+1)],\\
& p_\textup{tr} = 0.8, \epsilon = 0.5
.
\end{align*}
The remaining parameters are treatment effect $\mu$ and observation rate $\rho$, which we allow to vary across different trials.

\end{document}